\newcommand*{\PL}{\mathrm{PL}}
\newcommand*{\fsize}{\mathit{size}}
\newcommand*{\DM}{\mathrm{DM}}
\newcommand*{\modelset}{\mathcal{M}}
\newcommand*{\formset}{\mathcal{F}}
\newcommand*{\Nset}{\mathbb{N}}
\newcommand*{\MM}{\mathfrak{M}}
\newcommand*{\eofex}{\mbox{}\nobreak\hfill\hspace{0.5em}$\blacksquare$}
\newtheorem{theorem}{Theorem}
\newtheorem{lemma}[theorem]{Lemma}
\newtheorem{corollary}[theorem]{Corollary}
\theoremstyle{definition}
\newtheorem{definition}[theorem]{Definition}
\newtheorem{example}[theorem]{Example}
\begin{document}

\title{Explainability via Short Formulas: the Case of Propositional Logic with Implementation}

\author{
Reijo Jaakkola, Tomi Janhunen, Antti Kuusisto,
Masood~Feyzbakhsh~Rankooh, Miikka Vilander\footnote{The 
authors are given in the alphabetical order. Changes to version one: typos fixed in Section \ref{moregenerally}.}\\

Tampere University, Finland} 

\date{}

\maketitle

\hyphenation{imp-le-men-ta-tion imp-le-men-ta-tions prob-lems}

\begin{abstract}
\noindent
We conceptualize explainability in terms of logic and formula size, giving a number of related definitions of explainability in a very general setting. Our main interest is the so-called special explanation problem which aims to explain the truth value of an input formula in an input model.
The explanation is a formula of minimal size that (1)
agrees with the input formula on the input model and (2)  transmits the involved truth value to the input formula globally, i.e., on every model.
As an important example case, we study propositional logic in this setting and show that the
special explainability problem is complete for the second level of the polynomial hierarchy. We also provide an implementation of this problem in answer set programming and investigate its capacity in relation to explaining answers to the n-queens and dominating set problems.
\end{abstract}


\section{Introduction}

This paper investigates explainability in a general setting. The key in our approach is to relate explainability to formula size. 
We differentiate between the \emph{general} and \emph{special explanation problems}. The general explanation problem for a logic L takes as input a formula $\varphi\in\mathrm{L}$ and outputs an equivalent formula of minimal size. Thus the objective is to explain the global behaviour of $\varphi$. For example, we can consider $\neg\neg \neg \neg\neg \neg p$ to be globally explained by $p$. In contrast, the goal of the special explanation problem is to explicate why an input formula $\psi$ gets the truth value $b$ in an input model $M$. Given a tuple $(M,\psi,b)$, the problem outputs a formula $\chi$ of minimal size such that (1) the formula $\chi$ obtains the same truth value $b$ on $M$, and (2) on every model $M'$ where $\chi$ gets the truth value $b$, also $\psi$ gets that same truth value. Intuitively, this second condition states that the given truth value $b$ of $\chi$ on a model $M'$ causes $\psi$ to be judged similarly. In summary, the special explanation problem gives reasons why a piece of data (or a model) is treated in a given way (i.e., obtains a given truth value) by a classifier (or a formula).

The two explanation problems give rise to corresponding questions of explainability. The \emph{general explainability problem} for a logic L asks, given a formula $\varphi\in\mathrm{L}$ and   $k\in \Nset$, whether there exists a formula equivalent to $\psi$ of size at most $k$. The \emph{special explainability problem} gets as input a tuple $(M,\psi,b,k)$, and the tasks is then to check whether there exists a formula $\chi$ of size at most $k$ satisfying the above conditions (1) and (2) of the special explanation problem.

As an important particular case, we study the special explainability problem of propositional logic (PL) in detail. We prove that the problem is $\Sigma_2^P$-complete. This is an important result whose usefulness lies in its implications that go far beyond PL itself. Indeed, the result gives a robust lower bound for various logics. We demonstrate this by establishing $\Sigma_2^P$-completeness of the special explainability problem of S5 modal logic, and we observe that, indeed, a rather wide range of logics with satisfiability in NP have $\Sigma_2^P$-complete special explainability problems.

As a further theoretical result, we prove that, when limiting to explaining only the positive truth value $\top$, the special explainability problem is only NP-complete for $\mathit{CNF}$-formulas of PL. As a corollary, we get NP-completeness of the problem for $\mathit{DNF}$-formulas in restriction to the truth value $\bot$. While theoretically interesting, these results are also relevant from the point of view of applications, as quite often real-life classification scenarios require explanations only in the case of one truth value. For example, explanations concerning automated insurance decisions are typically relevant only in the case of rejected applications. In addition to the NP-completeness results, we also show that restricting to a single truth value here is necessary for obtaining the given complexity (supposing $\mathrm{coNP}\not\subseteq\mathrm{NP}$).

We provide an implementation of the special explainability problem of PL based on \emph{answer-set programming} (ASP). Generally, ASP is a logic programming language based on stable-model semantics and propositional syntax, particularly Horn clauses. ASP is \emph{especially suitable} and almost custom-made for implementing the special explainability problem of PL, as ASP is designed precisely for the complexity levels up to $\Sigma_2^P$. Indeed, while the disjunction-free fragments of ASP \cite{SimonsNS02} cover the first level of the polynomial hierarchy in a natural way, proper disjunctive rules with cyclic dependencies become necessary to reach the second one \cite{EiterG95}.

We test the implementation via experiments with benchmarks based on the \emph{$n$-queens} and 
\emph{dominating set problems}. The experiments provide concrete and compact explanations why a particular configuration of queens on the generalized chessboard or a particular set of vertices of a graph is or is not an acceptable solution to the involved problem. Runtimes scale exponentially in the size of the instance and negative explanations are harder to compute than positive ones.

Concerning related work, while the literature on explainability is extensive, only a small part of it is
primarily based on logic. In relation to the current paper, the special explainability problem in the particular case of PL has some similarities with the \emph{prime implicant} (or PI) explanations of \cite{ShihCD18}. However, there are some key differences. PI-explanations are defined in terms of finding a minimal subset of features---or propositions---that suffice to explain the input instance of a Boolean decision function. Our definition of special explainability allows for any kind of formula as output. In the propositional case, we separately prove that a subconjunction of the literals in the input is always a possible output. We end up with a similar goal of removing propositions, but from a different starting point and in a different setting. In the case of other logics, such as FO, the resemblance to prime implicant explanations decreases. Concerning results, \cite{ShihCD18} does not provide a full complexity analysis relating to the studied problems. 
Although PI-explanations are defined for any decision function, 
the algorithms used in \cite{ShihCD18} to compute PI-explanations have ordered binary decision diagrams (OBDD) as inputs and outputs. For these algorithms, the authors give empirical runtimes but no complexity bounds. Thus, in our work, while the space of potential input and output formulas is different, we also give a complete
complexity analysis of the special
explainability problem in addition to experiments.

In \cite{Umans01}, Umans shows that the \emph{shortest implicant problem} is $\Sigma_2^p$-complete, thereby solving a long-standing open problem of Stockmeyer. An implicant of $\varphi$ is a conjunction $\chi$ of literals such that $\chi \vDash \varphi$. The shortest implicant problem asks whether there is an implicant of $\varphi$ with size at most $k$. 
Size is defined as the number of occurrences of literals. Below we prove that the special explainability problem for PL can be reduced to certain particular implicant problems. However, despite this, the work of Umans does not directly modify to give the $\Sigma_2^p$-completess result of the special explainability problem. The key issue is that the explainability problem requires an interpolant between a set $\chi$ of literals and a formula $\varphi$, where $\chi$ has precisely the same set of propositions symbols as
$\varphi$. Thus we need to give an independent proof for the $\Sigma_2^P$ lower bound. Also, formula size in \cite{Umans01} is measured in a more coarse way.

\section{Preliminaries}

Let $\Phi$ be a set of proposition symbols. The set $\mathrm{PL}(\Phi)$ of formulas of \textbf{propositional logic} $\mathrm{PL}$ over $\Phi$ is given by the grammar $
\varphi := p \mid \neg\varphi \mid (\varphi \land \varphi) \mid (\varphi \lor \varphi) 
$,
where $p \in \Phi$. A \emph{literal} is either an atom $p$ or its negation $\neg p$, also known as \emph{positive} and \emph{negative} literals, respectively.
%
%
%
%
A $\Phi$-\textbf{assignment} is a
function $s:\Phi\rightarrow \{0,1\}$. When $\Phi$ is clear from the context or irrelevant, we simply refer to assignments rather than $\Phi$-assignments. We define the
semantics of
propositional logic in the usual way, and we
write $s\models \varphi$ if the assignment $s$
\textbf{satisfies} the formula $\varphi\in \mathrm{PL}(\Phi)$. Alternatively, we 
can use the \textbf{standard valuation}
function $v_{\Phi}$ defined
such that $v_{\Phi}(s,\varphi) = 1$ if
$s\vDash \varphi$ and
otherwise $v_{\Phi}(s,\varphi) = 0$.
Below we sometimes use the set $\{\bot,\top\}$ instead of $\{0,1\}$,


A formula $\psi \in \PL(\Phi)$ is a \textbf{logical consequence} of $\varphi \in \PL(\Phi)$, denoted $\varphi \vDash \psi$, if for every $\Phi$-assignment $s$, $s \vDash \varphi$ implies $s \vDash \psi$. A formula $\chi \in \PL(\Phi)$ is an \textbf{interpolant} between $\varphi$ and $\psi$ if $\varphi \vDash \chi$ and $\chi \vDash \psi$.
For a finite $\Phi$, we say that a formula $\varphi$ is a \textbf{maximal conjunction} w.r.t. $\Phi$ if $\varphi$ is a conjunction of exactly one \emph{literal} for each $p \in \Phi$. \emph{A $\Phi$-assignment $s$
can be naturally identified with 
a maximal conjunction,} for example 
$\{(p,1),(q,0)\}$ identifies with
$p\wedge\neg q$.
A formula $\chi$ is a \textbf{subconjunction} of $\varphi$ if $\chi$ is a conjunction of literals occurring in $\varphi$. 
The \textbf{size} of $\varphi$, denoted $\fsize(\varphi)$, is the number of occurrences of proposition symbols, binary connectives and negations in $\varphi$. For example, the size of $\neg \neg (p\wedge p)$ is $5$ as it has one $\wedge$ and two
occurrences of 
both $\neg$ and $p$. 

\section{Notions of explanation and explainability}

In this section we introduce four
natural problems concerning the
general and special perspectives to explainability. The general problems deal with the question of explaining the entire behaviour of a classifier, whereas the special ones attempt to explicate why a single input
instance was classified in a given way.
We give very general definitions of these problems, and for that we will devise a very general definition of the notion of a logic. Our definition of a logic covers various kinds of classifiers in addition to standard formal logics, including logic programs, Turing machines, neural network models, automata, and the like.

\begin{definition}
We define that a \textbf{logic} is a tuple $(\modelset, \formset, v, m)$ where $\modelset$ and $\formset$ are sets; 
${v:\mathcal{M}\times \mathcal{F}\rightarrow V}$ is a
function mapping to some set $V$; and 
$m : \formset \to \Nset$ is a function. Emphasizing 
the set $V$, we can 
also call $(\modelset, \formset, v, m)$ a \textbf{$V$-valued logic}.
\end{definition}

Intuitively, we can think of $\mathcal{M}$ as a 
set of models and $\mathcal{F}$ as a set of formulas.
The function $v:\mathcal{M}\times\mathcal{F}
\rightarrow V$ gives the
semantics of the logic, with
$v(\mathfrak{M},\varphi)$ being the truth value of $\varphi$ in $\mathfrak{M}$. We
call $v$ a \textbf{valuation}. The function $m$ 
gives a complexity measure for the formulas in $\mathcal{F}$,
such as, for example, formula size. In computational problems, where members of $V$ are inputs, we of course assume that $V$ is somehow representable.

\begin{example}
Propositional logic PL over a 
set $\Phi$ of proposition symbols 
can be defined as a tuple $(\mathcal{M},
\mathcal{F},v,m)$, where 
$\mathcal{M}$ is the set of $\Phi$-assignments; 
$\mathcal{F}$ the set $\mathrm{PL}(\Phi)$ of
formulas; $v:\mathcal{M}\times\mathcal{F}
\rightarrow \{0,1\}$ is
the standard valuation $v_{\Phi}$; and
$m(\varphi) = \fsize(\varphi)$
for all $\varphi\in \mathcal{F}$.
\eofex
\end{example}

Now, the following example demonstrates that we can consider much more general
scenarios than ones involving
the standard formal logics.

\begin{example}
Let $\mathcal{M}$ be a set of data 
and $\mathcal{F}$ a set of programs
for classifying the data, that is, 
programs that take elements of $\mathcal{M}$ as
inputs and output a value in
some set $V$ of suitable outputs.
Now $v:\mathcal{M}\times\mathcal{F}
\rightarrow V$ is just the function 
such that $v(D,P)$ is the output of $P\in\mathcal{F}$ on
the input $D\in\mathcal{M}$.
The function $m$ can quite
naturally give the program size for
each $P\in\mathcal{F}$. If we redefine the domain of $m$ to be $\mathcal{M}\times\mathcal{F}$, we can let
$m(D,P)$ be for example the running time of the program $P$ on the input $D$, or the length of the 
computation (or derivation) table. 
\eofex
\end{example}

Given a logic, we define the 
equivalence relation $\equiv\ \subseteq\mathcal{F}\times
\mathcal{F}$ such that $(M,M')\in\ \equiv$ if 
and only if 
$v(M,\varphi) = v(M',\varphi)$
for all $\varphi\in\mathcal{F}$.
We shall now define four formal
problems relating to explainability. 
The problems do work 
especially well for finite $V$, but this is
not required as long as the elements of $V$ 
are representable in the sense that they can be
used as inputs to computational problems.

\begin{definition}\label{firstdefexpl} Let $L = (\modelset, \formset, v, m)$ be a logic.
We define the following four problems for $L$.

\noindent\textbf{General explanation problem:}\\
\emph{Input}: $\varphi \in \formset$,\,  \emph{Output}: $\psi \in \formset$\\
\emph{Description}: Find $\psi \in \formset$ with $\psi \equiv \varphi$ and minimal $m(\psi)$.

\smallskip

\noindent\textbf{Special explanation problem} \\
\emph{Input}: $(\mathfrak{M},\varphi,b)$ 
where $\MM \in \modelset$, $\varphi \in \formset$ and $b \in V$,\, 
\emph{Output}: $\psi \in \formset$ or $\mathtt{error}$ \\ 
\emph{Description}: If $v(\mathfrak{M},\varphi)\not= b$,
output $\mathtt{error}$. Else
find $\psi\in\mathcal{F}$ with minimal $m(\psi)$
such that the following two conditions hold: \\
\indent {(1)}\  $v(\mathfrak{M},\psi) = b$ and\\ 
\indent {(2)} For all $\mathfrak{M}'
    \in \mathcal{M}$, $v(\mathfrak{M}',\psi) = b\  \Rightarrow\ v(\mathfrak{M}',\varphi) = b$.

\smallskip

\noindent\textbf{General explainability problem} \\
\emph{Input}: $(\varphi, k)$, where $\varphi \in \formset$ and $k \in \Nset$, \,
\emph{Output}: Yes or no \\
\emph{Description}: If there is $\psi \in \formset$ with $\psi \equiv \varphi$ and $m(\psi) \leq k$, output yes. Otherwise output no.

\smallskip

\noindent\textbf{Special explainability problem} \\
\emph{Input}: $(\mathfrak{M},\varphi,b, k)$ 
where $\MM \in \modelset$, $\varphi \in \formset$, $b \in V$ and $k \in \Nset$, \, 
\emph{Output}: Yes or no \\ 
\emph{Description}: Output ``yes'' if and only if
there exists some $\psi\in\mathcal{F}$
with $m(\psi)\leq k$ such 
that the conditions (1) and (2) of the 
special explanation problem hold.

\end{definition}


The above definitions are quite flexible. Notice, for example, that while the set $\mathcal{M}$ may typically be considered a set of models, or pieces of data, there are many further natural possibilities. For
instance, $\mathcal{M}$ can be a set of formulas. This nicely covers, e.g., model-free
settings based on proof systems.
However, the definitions above generalize quite naturally to a yet more 
comprehensive setting, as we will next demonstrate. The
main reason that we have first given the above definitions is that in this article we
shall concentrate
mostly on the case of propositional logic, and the definitions in their
above form suffice for that purpose. Secondly, the more general definitions
below are easier to
digest when reflected and compared to the above simpler scheme.
In the following subsection we 
develop the more general definitions. As already noted, we shall stick to the
above, less general definitions in the remaining parts of the article,
starting from Section \ref{explainabilityforpl}.

\subsection{The explanation and explainability problems more generally}\label{moregenerally}

The definition of a logic can be naturally extended as follows, allowing the complexity measure $m$ to depend on models as well as formulas.

\begin{definition}\label{logictwo}
A \textbf{logic} is a tuple of the form $(\modelset, \formset, v, m)$ where $\modelset$ and $\formset$ are sets; 
${v:\mathcal{M}\times \mathcal{F}\rightarrow V}$ is a
function mapping to some set $V$; and 
$m : \modelset \times \formset \to \Nset$ is a function. Emphasizing 
the set $V$, we can 
also call $(\modelset, \formset, v, m)$ a \textbf{$V$-valued logic}.
\end{definition}

We first generalize special explainability by replacing the single truth value $b \in V$ with a set $B \subseteq V$. Supposing $\mathcal{V}$ to be a
finite or countably infinite set of symbols, we call a
function $w:\mathcal{V}\rightarrow \mathcal{P}(V)$ a \textbf{representation over $\mathcal{P}(V)$}.
Here $\mathcal{P}(V)$ denotes the power set of $V$.  
For $b\in V$ and $B\in \mathcal{V}$, we write $b \in B$ to mean that $b \in w(B)$. The special explanation and special explainability problems are then defined as follows:

\begin{definition}\label{setdefexpl}
Let $L = (\modelset, \formset, v, m)$ be a logic.
We define the following two problems for $L$. 

\noindent\textbf{Special explanation problem} \\
\emph{Input}: $(\mathfrak{M},\varphi,B)$ 
where $\MM \in \modelset$, $\varphi \in \formset$ and $B \in \mathcal{V}$,\, 
\emph{Output}: $\psi \in \formset$ or $\mathtt{error}$ \\ 
\emph{Description}: If $v(\mathfrak{M},\varphi)\notin B$,
output $\mathtt{error}$. Else
find $\psi\in\mathcal{F}$ with minimal $m(\MM,\psi)$
such that the following two conditions hold: \\
\indent {(1)}\  $v(\mathfrak{M},\psi) \in B$ and\\ 
\indent {(2)} For all $\mathfrak{M}'
    \in \mathcal{M}$, $v(\mathfrak{M}',\psi) \in B\  \Rightarrow\ v(\mathfrak{M}',\varphi) \in B$.

\smallskip

\noindent\textbf{Special explainability problem} \\
\emph{Input}: $(\mathfrak{M},\varphi,B, k)$ 
where $\MM \in \modelset$, $\varphi \in \formset$, $B \in \mathcal{V}$ and $k \in \Nset$, \, 
\emph{Output}: Yes or no \\ 
\emph{Description}: Output ``yes'' if and only if
there exists some $\psi\in\mathcal{F}$
with $m(\MM,\psi)\leq k$ such 
that the conditions (1) and (2) of the 
special explanation problem hold.
\end{definition}

\begin{example}
    Let $\mathcal{M} = V = \mathbb{Z}$ and let $\mathcal{F}$ be the set of polynomials in the variable $x$ with integer coefficients. The function $v: \mathcal{M} \times \mathcal{F} \to V$ evaluates a polynomial $p \in \mathcal{F}$ at a point $a \in \mathcal{M}$. Let $m: \mathcal{F} \to \mathbb{N}$ be the complexity measure which counts the number of occurrences of $x$, operations (addition and multiplication) and constants.\footnote{Note that here we do not utilize the possibility of letting $m$ depend also on $\mathcal{M}$.}
    
    Consider the special explanation problem with the input 
    \[
    (4, p = x^5-2x^4-x^3-5x^2-2x-3, \mathbb{Z}_{> 0}).
    \]
    Intuitively, we are asking for an explanation for the fact that this polynomial evaluated at 4 gets a positive value. In this case the problem would output $x-3$, a very simple explanation. We see that $4-3 = 1$ is positive and since $p = (x-3)(x^2+1)(x^2+x+1)$, we also see that whenever $x-3$ is positive, $p$ is positive as well. Finally $m(x-3) = 3$ so $x-3$ is clearly the minimal explanation.
    \eofex
\end{example}

We generalize the problems further to allow for uncertainty or approximation. We consider metric spaces over 
the sets $V$ of ``truth values'' in $V$-valued logics. To enable real numbers in 
inputs to computational problems, we also consider representable
sets of reals: if $\mathcal{R}$ is a finite or countably infinite set of 
symbols, then a function $r:\mathcal{R}\rightarrow\mathbb{R}$ is called a
\textbf{representation of reals}. If $\varepsilon\in \mathcal{R}$, we
let $\mathit{Ball}(b,\varepsilon)$ denote the open ball that 
contains all those points of $V$ whose distance is at most $r(\varepsilon)$
from the point $b\in V$.\footnote{For the case $r(\varepsilon) = 0$ we consider a singleton to be an open ball. This is to make sure that our definitions generalize the previous ones without uncertainty or approximation.}

For the general explanation problem we additionally consider a pseudometric $d: \mathcal{F} \times \mathcal{F} \to \mathbb{R}_{\geq 0}$ over $\mathcal{F}$. A pseudometric is defined like a metric with the condition $d(x,y) = 0 \Rightarrow x = y$ removed. The pseudometric $d$ corresponds to the degree of equivalence between two formulas. Since two syntactically different formulas can nevertheless be equivalent, the use of a pseudometric instead of just a metric is well-justified.

The below definition extends Definitions \ref{firstdefexpl} and \ref{setdefexpl}.

\begin{definition} Let $L = (\modelset, \formset, v, m)$ be a $V$-valued logic as
given in Definition \ref{logictwo}. Let $\mathcal{T}$ be a 
metric space over $V$ and $r:\mathcal{R}\rightarrow\mathbb{R}$ a 
representation of reals. Let $w:\mathcal{V}\rightarrow \mathcal{P}(V)$ be a
representation over $\mathcal{P}(V)$. Let $d: \mathcal{F} \times \mathcal{F} \to \mathbb{R}_{\geq 0}$ be a pseudometric over $\mathcal{F}$.
We define the following four problems for $L$.

\medskip

\noindent\textbf{General explanation problem:}\\
\emph{Input}: $(\varphi, \varepsilon)$, where $\varphi \in \formset$, $\varepsilon \in \mathcal{R}$ \,  \emph{Output}: $\psi \in \formset$\\
\emph{Description}: Find $\psi \in \formset$ with $d(\varphi, \psi) \leq r(\varepsilon)$ and minimal $m(\MM,\psi)$.

\medskip

\noindent\textbf{Special explanation problem} \\
\emph{Input}: $(\mathfrak{M},\varphi,B,\varepsilon)$ 
where $\MM \in \modelset$, $\varphi \in \formset$, $B \in \mathcal{V}$
and $\varepsilon \in \mathcal{R}$\\ 
\emph{Output}: $\psi \in \formset$ or $\mathtt{error}$ \\ 
\emph{Description}: If $v(\mathfrak{M},\varphi)\not\in B$,
output $\mathtt{error}$.
Else find $\psi\in\mathcal{F}$ with minimal $m(\mathfrak{M}, \psi)$
such that the following two conditions hold: \\ \\ 
\indent {(1)}\  $v(\mathfrak{M},\psi) \in \bigcup\limits_{b\in B}\mathit{Ball}(b,\varepsilon)$ and\\ 
\indent {(2)} For all $\mathfrak{M}'
    \in \mathcal{M}$, we have 
$$v(\mathfrak{M}',\psi) \in \bigcup\limits_{b\in B}\mathit{Ball}(b,\varepsilon)\  \Rightarrow\ v(\mathfrak{M}',\varphi) \in \bigcup\limits_{b\in B}\mathit{Ball}(b,\varepsilon).$$

\medskip

\noindent\textbf{General explainability problem} \\
\emph{Input}: $(\varphi, \varepsilon,k)$, where $\varphi \in \formset$, $\varepsilon \in \mathcal{R}$ and $k \in \Nset$, \,
\emph{Output}: Yes or no \\
\emph{Description}: If there is $\psi \in \formset$ with $d(\varphi, \psi) \leq r(\varepsilon)$ and $m(\MM,\psi) \leq k$, output yes. Otherwise output no.

\medskip

\noindent\textbf{Special explainability problem} \\
\emph{Input}: $(\mathfrak{M},\varphi,B, \varepsilon, k)$ 
where $\MM \in \modelset$, $\varphi \in \formset$, $B \in \mathcal{V}$, $\varepsilon \in \mathcal{R}$ and $k \in \Nset$, \, \\
\emph{Output}: Yes or no \\ 
\emph{Description}: Output ``yes'' if and only if
there exists some $\psi\in\mathcal{F}$
with $m(\mathfrak{M},\psi)\leq k$ such 
that the conditions (1) and (2) of the 
special explanation problem hold.

\end{definition}

Note that if we restrict the sets $B$ in the above definition to singletons, we obtain an important, specific generalization of the special explanation and special explainability problems of Definition \ref{firstdefexpl}. In this version, there is only one truth value to be explained but we still have a metric space over the truth values.

    Another important generalization is the possibility of the input and output logics being different, for example explaining the behaviour of a neural network via a small decision tree. For two $V$-valued logics, the definition of the special explainability problem works as is. For general explainability, the pseudometric $d$ must be defined over the set $\mathcal{F}_1 \cup \mathcal{F}_2$ of formulas of both logics.

\subsection{Special explainability for PL}\label{explainabilityforpl}
%
%
%

Let $(\varphi,\psi,b)$ be an input to the special explanation problem of propositional logic, where $\varphi$ is a maximal conjunction w.r.t. some (any) finite $\Phi$ (thus encoding a $\Phi$-assignment), $\psi$ a $\Phi$-formula and $b\in \{\bot,\top\}$. The special explanation problem can be reformulated equivalently in the following way. 
(1) Suppose $b = \top$. If $\varphi \models \psi$, find a minimal interpolant between $\varphi$ and $\psi$. Else
output $\mathtt{error}$.
(2) Suppose $b = \bot$. If $\varphi \models \neg \psi$, find a minimal interpolant between $\psi$ and $\neg \varphi$.
Else output $\mathtt{error}$.

Let $\varphi \in \PL(\Phi)$ be a conjunction of literals. Let $P(\varphi)$ and $N(\varphi)$ be the sets of positive and negative literals in $\varphi$, respectively. We denote the De Morgan transformations of $\varphi$ and $\neg\varphi$ by
\[
\DM(\varphi) := \bigwedge\limits_{p \in P(\varphi)} p \land \neg \bigg(\bigvee\limits_{\neg q \in N(\varphi)} q\bigg) \text{ \ \ \ \ and \ \ \ \ } \DM(\neg \varphi) := \neg \bigg(\bigwedge_{p \in P(\varphi)} p\bigg) \lor \bigvee_{\neg q \in N(\varphi)} q.
\]


\begin{lemma}\label{fullsize}
Let $\Phi$ be a finite set of proposition symbols, let $\varphi$ be a maximal conjunction w.r.t. $\Phi$ and let $\psi \in \PL(\Phi)$. (1) If $\varphi \models \psi$, then there is a subconjunction $\chi$ of $\varphi$ such that $\DM(\chi)$ is a minimal interpolant between $\varphi$ and $\psi$.
(2) If $\varphi \models \neg \psi$, then there is a subconjunction $\chi$ of $\varphi$ such that $\DM(\neg \chi)$ is a minimal interpolant between $\psi$ and $\neg \varphi$.

\end{lemma}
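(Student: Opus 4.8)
The plan is to reduce the statement to a question about short implicants. Since $\varphi$ is a maximal conjunction w.r.t. the finite set $\Phi$, it has a unique model, namely the assignment $s$ it encodes, so a formula $\theta$ is an interpolant between $\varphi$ and $\psi$ exactly when $s\models\theta$ and $\theta\models\psi$. Subconjunctions $\chi$ of $\varphi$ correspond to subsets $A\subseteq\Phi$ via $\chi=\bigwedge_{p\in A}\ell_p$, where $\ell_p$ is the literal of $p$ made true by $s$; the models of $\chi$ are exactly the assignments agreeing with $s$ on $A$, and $\DM(\chi)\equiv\chi$. Because $\DM$ gathers all negative literals under a single negation, $\fsize(\DM(\chi))\le 2|A|$ (with equality unless $\chi$ is purely positive), which is precisely why the lemma outputs $\DM(\chi)$ rather than the more wasteful $\chi$.

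For (1) I would fix a minimal interpolant $\theta$ — one exists, since $\varphi$ itself is an interpolant — let $A$ be the set of proposition symbols occurring in $\theta$, and put $\chi=\bigwedge_{p\in A}\ell_p$. First I check that $\DM(\chi)$ is an interpolant. The direction $\varphi\models\DM(\chi)$ is immediate, as $\chi$ is a subconjunction of $\varphi$. For $\DM(\chi)\models\psi$, note that any model of $\chi$ agrees with $s$ on $A$ and so is obtained from $s$ by flipping only variables that do not occur in $\theta$; since $s\models\theta$ and the truth value of $\theta$ is unaffected by such flips, every model of $\chi$ satisfies $\theta$, whence $\chi\models\theta\models\psi$.

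The main obstacle is the size comparison, which I expect to require a small case split. Writing $t$ for the number of proposition occurrences in $\theta$ and $g$ for its number of negations, $\theta$ has $t-1$ binary connectives, so $\fsize(\theta)=2t-1+g$ while $t\ge|A|$. If some variable occurs twice ($t>|A|$) or $\theta$ contains a negation ($g\ge 1$), then $\fsize(\theta)\ge 2|A|\ge\fsize(\DM(\chi))$, and since $\DM(\chi)$ is an interpolant and $\theta$ is minimal, $\DM(\chi)$ is a minimal interpolant. The delicate case is when $\theta$ is read-once and negation-free, i.e.\ monotone, where $\fsize(\theta)=2|A|-1$ falls one short. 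Here I would instead discard the variables set to $0$ by $s$ and take $\chi^{+}=\bigwedge_{p\in A^{+}}p$, where $A^{+}=\{p\in A: s(p)=1\}$. Monotonicity gives $\chi^{+}\models\psi$: any model of $\chi^{+}$ dominates (pointwise) the assignment obtained from it by resetting $A\setminus A^{+}$ to $0$, and the latter agrees with $s$ on all of $A$ and hence satisfies $\theta$. Since a negation-free formula is falsified by the all-zero assignment, $s$ sets some variable of $\theta$ to $1$, so $A^{+}\neq\emptyset$; then $\fsize(\DM(\chi^{+}))=2|A^{+}|-1\le 2|A|-1=\fsize(\theta)$, again yielding a minimal interpolant of the required form.

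Part (2) follows by the dual argument. An interpolant between $\psi$ and $\neg\varphi$ is a formula whose models include those of $\psi$ yet omit the single point $s$; taking negations, this is the same as an interpolant between $\varphi$ and $\neg\psi$, and $\varphi\models\neg\psi$ holds by hypothesis. Since $\DM(\neg\chi)\equiv\neg\chi$ is the clause dual to the term $\chi$, the requirement now reads $\bigwedge_{p\in A}\ell_p\models\neg\psi$, i.e.\ $\chi$ is an implicant of $\neg\psi$ consistent with $s$, to be output as $\DM(\neg\chi)$. The interpolant check and the (entirely symmetric) size bookkeeping then repeat those of part (1), with $\psi$ replaced by $\neg\psi$ and terms replaced by clauses.
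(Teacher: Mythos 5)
Your proof is correct, and it shares the paper's basic skeleton: fix a minimal interpolant $\theta$, restrict to the set $A$ of proposition symbols occurring in $\theta$, take the corresponding subconjunction of $\varphi$, and compare sizes, with negation-free $\theta$ as the delicate case. The execution, however, differs in two substantive ways. First, where you define $\chi=\bigwedge_{p\in A}\ell_p$ outright and verify $\chi\models\theta$ semantically (models of $\chi$ differ from $s$ only on variables not occurring in $\theta$), the paper reaches the same formula by rewriting $\theta$ into $\Phi(\theta)$-full disjunctive normal form and selecting the unique disjunct that is a subconjunction of $\varphi$; your route avoids the normal-form machinery entirely and makes the size accounting explicit and arithmetic ($\fsize(\theta)=2t-1+g$, $\fsize(\DM(\chi))\leq 2|A|$), which is arguably cleaner. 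Second, in the monotone case the two proofs genuinely diverge: the paper argues via a second appeal to the minimality of $\theta$ that $\chi$ can contain no negative literals in the first place, whereas you keep $\chi$ as found and constructively pass to $\chi^{+}$, using monotonicity for $\chi^{+}\models\theta$ and the all-zero assignment to ensure $A^{+}\neq\emptyset$; your variant is more self-contained, the paper's is shorter. For part (2) the paper spells out the dual argument via $\Phi(\theta)$-full conjunctive normal form, while you only sketch it; one caution there is that your opening remark that negating formulas identifies the problem with part (1) should not be leaned on for sizes, since negation adds one to $\fsize$ and hence does not carry minimal interpolants to minimal interpolants---but you do not in fact rely on it, and the clause-dual bookkeeping you indicate (with $A^{-}$ and the all-ones assignment replacing $A^{+}$ and the all-zero one) goes through exactly as claimed.
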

\begin{proof}
Assume that $\varphi \models \psi$. Clearly at least one minimal interpolant exists, as for example $\varphi$ itself is an interpolant. Let $\theta$ be a minimal interpolant. Let $\Phi(\theta)$ be the set of proposition symbols occurring in $\theta$. We transform $\theta$ into an equivalent formula $\theta'$ in {$\Phi(\theta)$-full} disjunctive normal form where each disjunct is a maximal conjunction w.r.t. $\Phi(\theta)$. 
To see that such a form always exists, first consider the DNF of $\theta$ and then, for each disjunct, fill in all possible values of any missing propositions (thus possibly increasing the number of disjuncts).

Now, as $\varphi$ is a maximal conjunction w.r.t. $\Phi$, exactly one disjunct of $\theta'$ is a subconjunction of $\varphi$. Let $\chi$ denote this disjunct. As $\varphi \models \psi$,
the formula $\chi$ is clearly an interpolant between $\varphi$ and $\psi$. 

Now, each proposition in $\Phi(\theta)$ occurs in $\chi$ and thus also in $\DM(\chi)$ exactly once, so $\DM(\chi)$ has at most the same number of occurrences of proposition symbols and binary connectives as $\theta$. 
Furthermore, $\DM(\chi)$ has at most one negation.
If $\theta$ has no negations, then we claim $\DM(\chi)$ also has none. To see this, note that $\chi$ is a disjunct of $\theta'$ and $\theta'$ is equivalent to $\theta$, so $\chi\models\theta$. As $\theta$ is negation-free, we have $\chi'\models\theta$ where $\chi'$ is obtained from $\chi$ by removing all the negative literals. Hence, by the minimality of $\theta$, we have $\chi' = \chi$. Thus $\chi$ and $\DM(\chi)$ are negation-free. 
We have shown that $\fsize(\DM(\chi)) \leq \fsize(\theta)$, so $\DM(\chi)$ is a minimal interpolant.

Suppose then that $\varphi \vDash \neg\psi$. Let $\theta$ be a minimal interpolant between $\psi$ and $\neg \varphi$. In a dual fashion compared to the positive case above, we transform $\theta$ into an equivalent formula $\theta'$ in {$\Phi(\theta)$-full} conjunctive normal form. Additionally let $\varphi'$ be the negation normal form of $\neg\varphi$. Now $\varphi'$ is a disjunction of literals and exactly one conjunct of $\theta'$ is a subdisjunction of $\varphi'$. Let $\chi'$ denote this conjunct and let $\chi$ denote the negation normal form of $\neg \chi'$. Now $\chi$ is a subconjunction of $\varphi$ and $\neg \chi$ is an interpolant between $\psi$ and $\neg \varphi$. 

As in the positive case, $\DM(\neg\chi)$ has at most the same number of proposition symbols and binary connectives as the minimal interpolant $\theta$. The formula $\DM(\neg\chi)$ again has at most one negation so we only check the case where $\theta$ has no negations. Recall that $\neg\chi$ is equivalent to $\chi'$, which in turn is a conjunct of $\theta'$. Thus $\theta \vDash \chi'$. As $\theta$ has no negations and $\chi'$ is a disjunction of literals, we have $\theta \vDash \chi''$, where $\chi''$ is obtained from $\chi'$ by removing all the negative literals. By the minimality of $\theta$ we obtain $\chi'' = \chi'$ so $\chi'$ has no negations. Thus also $\DM(\neg\chi)$ is negation-free and is a minimal interpolant.
\end{proof}

The above lemma implies that for propositional logic, it suffices to consider subconjunctions of the input in the special explanation and explainability problems. This will be very useful both in the below theoretical considerations and in implementations.

We next prove $\Sigma_2^p$-completeness of the special explainability problem via a reduction from $\Sigma_2 SAT$, which is well-known to be $\Sigma_2^p$-complete. The input of the problem is a quantified Boolean formula $\varphi$ of the form 
$$\exists p_1 \dots \exists p_n \forall q_1 \dots \forall q_m \theta(p_1,\dots,p_n,q_1,\dots,q_m).$$
The output is yes iff $\varphi$ is true.

\begin{theorem}\label{maintheoreticalresult}
The special explainability problem for $\PL$ is $\Sigma_2^p$-complete.
\end{theorem}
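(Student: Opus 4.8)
The plan is to establish membership in $\Sigma_2^p$ and $\Sigma_2^p$-hardness separately. For membership, I would first use Lemma \ref{fullsize} to reduce to a question about subconjunctions. Identify the input model $\mathfrak{M}$ with the maximal conjunction $\chi_{\mathfrak{M}}$ it induces, and treat $b=\top$ (the case $b=\bot$ is handled dually by part (2) of the lemma, replacing $\varphi$ with $\neg\varphi$). First check $v(\mathfrak{M},\varphi)=b$ in polynomial time, outputting ``no'' otherwise, since instantiating condition (2) with $\mathfrak{M}'=\mathfrak{M}$ already forces this. Now any subconjunction $\chi$ of $\chi_{\mathfrak{M}}$ satisfies $v(\mathfrak{M},\chi)=\top$ automatically and $\DM(\chi)\equiv\chi$, so condition (1) is free and condition (2) reduces to $\chi\models\varphi$. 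By Lemma \ref{fullsize} a minimal interpolant of the form $\DM(\chi)$ always exists, so the answer is ``yes'' iff there is a subconjunction $\chi$ with $\chi\models\varphi$ and $\fsize(\DM(\chi))\le k$. This yields a $\Sigma_2^p$-algorithm: existentially guess the subset of literals of $\chi_{\mathfrak{M}}$ forming $\chi$ (polynomially many, as a maximal conjunction has one literal per proposition), verify $\fsize(\DM(\chi))\le k$, and then issue a single universal (coNP) query testing $\chi\models\varphi$.

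The main work, and the principal obstacle, is the lower bound, which I would prove by reducing from $\Sigma_2 SAT$. Given $\exists p_1\cdots\exists p_n\,\forall q_1\cdots\forall q_m\,\theta$, introduce for each existential variable $p_i$ a fresh complement proposition $p_i^c$, and let $\Phi=\{p_i,p_i^c : i\le n\}\cup\{q_j : j\le m\}$. Let $\mathfrak{M}$ send every proposition in $\Phi$ to $1$, so $\chi_{\mathfrak{M}}$ is the all-positive maximal conjunction and every available literal is positive. Let $\theta^*$ be obtained from $\theta$ by replacing each negative literal $\neg p_i$ with the positive literal $p_i^c$, and set
\[
\varphi:=\bigwedge_{i=1}^n (p_i\vee p_i^c)\ \wedge\ \Big(\bigvee_{i=1}^n (p_i\wedge p_i^c)\ \vee\ \theta^*\Big).
\]
Finally put $b:=\top$ and $k:=2n-1$. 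One checks $v(\mathfrak{M},\varphi)=\top$ (the first inner disjunct holds at the all-ones assignment), and the reduction is plainly polynomial.

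It then remains to argue that a subconjunction $\chi$ with $\chi\models\varphi$ and $\fsize(\DM(\chi))\le 2n-1$ exists iff the $\Sigma_2 SAT$ instance is true; by the membership reduction this is precisely the explainability question for $(\mathfrak{M},\varphi,\top,2n-1)$. Since $\chi$ is a conjunction of positive literals, the conjunct $\bigwedge_i(p_i\vee p_i^c)$ forces, by entailment, that $\chi$ contain at least one of $p_i,p_i^c$ for each $i$, hence at least $n$ literals; as an all-positive subconjunction of $t$ literals has $\DM$-size $2t-1$, the budget forces at most $n$ literals. Thus a witness $\chi$ selects exactly one of $\{p_i,p_i^c\}$ per $i$ and no $q_j$, encoding an assignment $\bar p$. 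The escape disjunct $\bigvee_i(p_i\wedge p_i^c)$ neutralizes the universally quantified free complement variables: every extension of $\chi$ making some pair simultaneously true already satisfies $\varphi$, so the only constraining extensions are the consistent ones (the unselected complement set to $0$), on which $\theta^*$ collapses to $\theta(\bar p,\bar q)$ with $\bar q$ ranging freely. Hence $\chi\models\varphi$ iff $\forall\bar q\,\theta(\bar p,\bar q)$, closing the reduction.

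The crux — and the exact point where Umans' shortest-implicant argument does not transfer, as noted in the introduction — is this need to read a genuine two-valued assignment out of a subconjunction of a maximal conjunction over \emph{all} of $\Phi$ while the entailment test $\chi\models\varphi$ simultaneously simulates the universal quantifier. The complement-and-escape gadget is what reconciles the ``same proposition set'' feature of Lemma \ref{fullsize} with real existential choice between $0$ and $1$, and verifying its behaviour (coverage forcing, budget tightness, and the collapse to $\theta$ on consistent extensions) is where I expect the bulk of the careful argument to lie.
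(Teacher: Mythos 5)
Your proposal is correct and follows essentially the same route as the paper: the paper's lower-bound reduction uses exactly your gadget---fresh complement propositions $\overline{p}_i$, the coverage conjunct $\bigwedge_{i}(p_i\vee\overline{p}_i)$, the escape disjunct $\bigvee_{i}(p_i\wedge\overline{p}_i)$, the all-ones input assignment, and the budget $2n-1$---with the backward direction likewise resting on Lemma~\ref{fullsize} to restrict to subconjunctions and the same coverage/budget counting to extract the witness assignment. The only deviations are cosmetic: the paper keeps $\theta$ unchanged instead of substituting $p_i^c$ for the negative literals $\neg p_i$ (on consistent extensions both evaluate to $\theta(\bar{u},\bar{q})$, so either variant works), and it dismisses the $\Sigma_2^p$ upper bound as clear, whereas you spell out the guess-a-subconjunction-then-coNP-entailment-query algorithm.
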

\begin{proof}
The upper bound is clear. For the lower bound, we will give a polynomial time (Karp-) reduction from $\Sigma_2 SAT$. Consider an instance
$$\exists p_1 \dots \exists p_n \forall q_1 \dots \forall q_m \theta(p_1,\dots,p_n,q_1,\dots,q_m)$$
of $\Sigma_2 SAT$. We start by introducing, for every existentially quantified Boolean variable $p_i$, a new proposition symbol $\overline{p}_i$. 
Denoting $\theta(p_1,\dots,p_n,q_1,\dots,q_m)$ simply by $\theta$, we define 
\[\psi\ :=\ \bigwedge_{i=1}^n (p_i \lor \overline{p}_i) \land \bigg(\theta \vee \bigvee_{i = 1}^n (p_i \land \overline{p}_i)\bigg).\]
We let $s$ be the valuation mapping all proposition symbols to 1, i.e., the assignment corresponding to the maximal conjunction $\varphi_s := \bigwedge_{i=1}^n (p_i \land \overline{p}_i) \land \bigwedge_{j=1}^m q_j$ w.r.t. the set of proposition symbols in $\psi$.
%
%
%
%
%
%
%
%
%
Clearly $\varphi_s \models \psi$. We now claim that there exists an interpolant of size at most $2n - 1$ between $\varphi_s$ and $\psi$ iff the original instance of $\Sigma_2 SAT$ is true.

Suppose first that the original instance of $\Sigma_2 SAT$ is true. Thus there exists a tuple $(p_1,\dots,p_n) \in \{0,1\}^n$ such that
$\forall q_1 \dots q_m \theta(p_1,\dots,p_n,q_1,\dots,q_m)$
is true. Consider now the subconjunction
$\chi := \bigwedge_{s(p_i) = 1} p_i\ \land\ \bigwedge_{s(p_i) = 0} \overline{p}_i$ of $\varphi_s$.
%
%
%
%
%
%
Clearly $\chi$ is of size $2n - 1$. It is easy to see that $\chi$ is also an interpolant between $\varphi_s$ and $\psi$.

Suppose then that $\chi$ is an interpolant of size at most $2n - 1$ between $\varphi_s$ and $\psi$. Using Lemma \ref{fullsize}, we can assume that $\chi$ is a subconjunction of $\varphi_s$. Since $\chi$ has size at most $2n - 1$, it can contain at most $n$ proposition symbols. Furthermore, $\chi$ must contain, for every $i\in\{1,\dots , n\}$, either $p_i$ or $\overline{p}_i$, since otherwise $\chi$ would not entail $\bigwedge_{i=1}^n (p_i \lor \overline{p}_i)$. Thus $\chi$ contains precisely $n$ proposition symbols. More specifically, $\chi$ contains, for every $i\in \{1,\dots , n\}$, either $p_i$ or $\overline{p}_i$. Now, we define a tuple $(u_1,\dots,u_n) \in \{0,1\}^n$ by setting $u_i = 1$ if $\chi$ contains $p_i$ and $u_i = 0$ if $\chi$ contains $\overline{p}_i$. It is easy to see that
$\forall q_1 \dots q_m \theta(u_1,\dots,u_n,q_1,\dots,q_m)$
is true.
\end{proof}

The above theorem immediately implies a wide range of corollaries. 
Recall that $\mathrm{S5}$ is the system of modal logic where the accessibility relations are equivalences. The special explainability problem for $\mathrm{S5}$ has as input a pointed $\mathrm{S5}$-model $(\mathfrak{M}, w)$, an $\mathrm{S5}$-formula $\varphi$, $b \in \{\top, \bot\}$ and $k \in \Nset$.

\begin{corollary}
The special explainability problem for $\mathrm{S5}$ is $\Sigma_2^p$-complete.
\end{corollary}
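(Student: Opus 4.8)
The plan is to obtain both bounds by exploiting the facts that $\mathrm{S5}$ is syntactically an extension of $\mathrm{PL}$ and that $\mathrm{S5}$-satisfiability is in $\mathrm{NP}$.

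For the upper bound, I would argue membership in $\Sigma_2^p = \mathrm{NP}^{\mathrm{NP}}$ directly from the shape of the problem. Nondeterministically guess a candidate explanation $\psi$ with $m(\psi)\leq k$; since $\varphi$ itself always witnesses $k \leq \fsize(\varphi)$, such a $\psi$ has size polynomial in the input. Condition (1), namely $v((\mathfrak{M},w),\psi)=b$, is an $\mathrm{S5}$ model-checking question, decidable in polynomial time. Condition (2) is an entailment/validity statement in $\mathrm{S5}$ (for $b=\top$ it asserts the $\mathrm{S5}$-validity of $\psi \to \varphi$, and dually for $b=\bot$), and since $\mathrm{S5}$-satisfiability lies in $\mathrm{NP}$, this condition is decidable in $\mathrm{coNP}$. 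Hence the problem is an existential guess followed by a $\mathrm{coNP}$ test, placing it in $\Sigma_2^p$.

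For the lower bound, I would give a polynomial-time reduction from the special explainability problem for $\mathrm{PL}$, which is $\Sigma_2^p$-complete by Theorem \ref{maintheoreticalresult}. Given a $\mathrm{PL}$-instance $(s,\varphi,b,k)$, where $s$ is a $\Phi$-assignment and $\varphi\in\mathrm{PL}(\Phi)$, map it to the $\mathrm{S5}$-instance $((\mathfrak{M}_s,w),\varphi,b,k)$, where $(\mathfrak{M}_s,w)$ is the single-world $\mathrm{S5}$-model whose unique world $w$ carries the valuation $s$, and $\varphi$ is read as a modality-free $\mathrm{S5}$-formula. The key observations are: (i) on any single-world $\mathrm{S5}$-model the operators $\Box$ and $\Diamond$ are transparent, so each $\mathrm{S5}$-formula $\psi$ is equivalent there to the $\mathrm{PL}$-formula $\psi^\flat$ obtained by deleting all modalities, with $\fsize(\psi^\flat)\leq \fsize(\psi)$; and (ii) because $\varphi$ is propositional, its truth in a pointed model depends only on the valuation of the evaluation world, so the universal quantification over all $\mathrm{S5}$-models in condition (2) already captures $\mathrm{PL}$-entailment once restricted to single-world models. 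Using (ii), any $\mathrm{PL}$-explanation is immediately an $\mathrm{S5}$-explanation of the same size; using (i) together with the fact that single-world $\mathrm{S5}$-models coincide with propositional valuations, any $\mathrm{S5}$-explanation $\psi$ yields the $\mathrm{PL}$-explanation $\psi^\flat$ of no larger size. Hence a size-$k$ explanation exists for the $\mathrm{S5}$-instance if and only if one exists for the $\mathrm{PL}$-instance.

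The main obstacle is the backward direction of the correctness argument: passing to $\mathrm{S5}$ a priori enlarges the space of admissible explanations to include genuinely modal formulas, and one must rule out that some short modal formula explains the instance while no propositional formula of that size does. The modality-deletion map $\psi\mapsto\psi^\flat$ resolves this, as it never increases size and, on single-world models, preserves exactly the truth values relevant to conditions (1) and (2). The argument is uniform in $b\in\{\top,\bot\}$, the case $b=\bot$ being dual.
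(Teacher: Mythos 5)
Your proposal is correct and takes essentially the same route as the paper: the upper bound by guessing a polynomial-size candidate and checking condition (2) as an $\mathrm{S5}$-validity query (decidable in \textsc{coNP} since $\mathrm{S5}$-satisfiability is in \textsc{NP}), and the lower bound by transferring the $\Sigma_2^p$-hardness of the $\mathrm{PL}$ problem from Theorem \ref{maintheoreticalresult}. The paper states that the lower bound ``follows immediately'' from that theorem; your single-world embedding together with the size-nonincreasing modality-deletion map $\psi \mapsto \psi^\flat$ is exactly the argument needed to justify that claim, so you have merely made explicit what the paper leaves implicit.
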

\begin{proof}
The lower bound follows immediately from Theorem \ref{maintheoreticalresult}, while the upper bound follows from the well-known fact that the validity problem for $\mathrm{S5}$ is \textsc{coNP}-complete \cite{BlackburnRV01}.
\end{proof}

Note indeed that the $\Sigma_2^p$ lower bound for propositional logic is a rather useful result, implying $\Sigma_2^p$-completeness of special explainability for various logics with an $\mathrm{NP}$-complete satisfiability.

We next show that if the formula $\psi$ in the special explainability problem is restricted to $\mathit{CNF}$-formulas and we consider only the case $b = \top$, then the problem is \textsc{NP}-complete. The following example demonstrates that it is necessary to restrict to the case $b = \top$.

\begin{example}\label{example:cnf-generally-hard}
    Let $\psi \in \PL(\Phi)$ be an arbitrary $\mathit{CNF}$-formula and let $q$ be a proposition symbol such that $q \not \in \Phi$. Suppose that $s \nvDash \psi$, where $s(p) = 1$ for every $p \in \Phi$. Consider now the formula $\varphi_q := q \ \lor \ \bigvee_{p \in \Phi} \neg p$.
    Note that $\neg \varphi_q$ is equivalent to a maximal conjunction w.r.t. $\Phi \cup \{q\}$. Clearly $\psi \land \neg q \models \varphi_q$, since $\psi$ entails that $\bigvee_{p \in \Phi} \neg p$. We now claim that there exists an interpolant $\chi$ of size one between $\psi \land \neg q$ and $\varphi_q$ iff $\psi$ is unsatisfiable. First, we note that the only proposition from $\Phi \cup \{q\}$ which entails $\varphi_q$ is $q$. But the only way $q$ can be an interpolant between $\psi \land \neg q$ and $\varphi_q$ is that $\psi$ is unsatisfiable. Conversely, if $\psi$ is unsatisfiable, then $q$ is clearly an interpolant between $\psi \land \neg q$ and $\varphi_q$. Since the unsatisfiability problem of $\mathit{CNF}$-formulas is \textsc{coNP}-hard, the special explainability problem for $\mathit{CNF}$-formulas is also \textsc{coNP}-hard.
    \eofex
\end{example}

To prove \textsc{NP}-hardness, we will give a reduction from the dominating set problem. For a graph $G = (V, E)$, a \textbf{dominating set} $D \subseteq V$ is a set of vertices such that every vertice not in $D$ is adjacent to a vertice in $D$. The input of the dominating set problem is a graph and a natural number $k$. The output is yes, if the graph has a dominating set of at most $k$ vertices.

\begin{theorem}\label{thm:cnf-formulas}
For $\mathit{CNF}$-formulas, the special explainability problem with $b=\top$ is \textsc{NP}-complete. The lower bound holds even if we restrict our attention to formulas without negations. 
\end{theorem}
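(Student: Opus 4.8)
The plan is to prove both bounds by leaning on Lemma \ref{fullsize}, which lets us restrict attention to subconjunctions of the input assignment. For the \textbf{upper bound}, recall that with $b = \top$ an explanation is exactly an interpolant between the maximal conjunction $\varphi_s$ encoding the input assignment $s$ and the input $\mathit{CNF}$-formula $\psi$. By Lemma \ref{fullsize} a minimal such interpolant has the form $\DM(\chi)$ for some subconjunction $\chi$ of $\varphi_s$, so if any interpolant of size at most $k$ exists, then one of this special shape does. This yields an $\mathrm{NP}$ algorithm: first verify $s \models \psi$ (otherwise reject, since no valid explanation can then exist); then guess a subconjunction $\chi$ of $\varphi_s$ (at most one literal per proposition symbol in $\Phi$), compute $\fsize(\DM(\chi))$ and check it is at most $k$, and finally verify $\DM(\chi) \models \psi$. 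The last check is the crucial point: since $\DM(\chi)$ is logically a conjunction of literals and $\psi = \bigwedge_j C_j$ is in $\mathit{CNF}$, the entailment holds iff for every clause $C_j$ either $C_j$ is a tautology or $C_j$ shares a literal with $\chi$, which is decidable in polynomial time. For a general formula $\psi$ this entailment would be $\mathrm{coNP}$-hard, so restricting the witness shape via Lemma \ref{fullsize} is exactly what buys membership in $\mathrm{NP}$.

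For the \textbf{lower bound} I would reduce from the dominating set problem. Given a graph $G = (V,E)$ and $k \in \Nset$, introduce a proposition symbol $p_v$ for each $v \in V$ and, writing $N[v] = \{v\} \cup \{u : \{u,v\} \in E\}$ for the closed neighbourhood, set
\[
\psi := \bigwedge_{v \in V}\ \bigvee_{u \in N[v]} p_u,
\]
which is a negation-free $\mathit{CNF}$-formula. Take $s$ to be the all-ones assignment, so that $\varphi_s = \bigwedge_{v \in V} p_v$ and $s \models \psi$ since each clause contains its own $p_v$, and let the size bound be $2k - 1$. The target special explainability instance is then $(\varphi_s, \psi, \top, 2k-1)$.

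The correspondence to argue is that dominating sets of size at most $k$ match interpolants of size at most $2k-1$. For a set $D \subseteq V$ put $\chi_D := \bigwedge_{v \in D} p_v$, a subconjunction of $\varphi_s$ of size $2|D|-1$. Since $\chi_D$ and each clause are built from positive literals only, $\chi_D \models \bigvee_{u \in N[v]} p_u$ holds iff $D \cap N[v] \neq \emptyset$; hence $\chi_D \models \psi$ iff $D$ dominates every vertex. For the forward direction, a dominating set $D$ with $|D| \le k$ gives the interpolant $\chi_D$ of size $2|D|-1 \le 2k-1$. For the backward direction, any interpolant of size at most $2k-1$ yields, via Lemma \ref{fullsize} together with the fact that $\varphi_s$ is negation-free, a positive subconjunction $\chi = \chi_D$ of size at most $2k-1$ with $\chi \models \psi$; then $|D| \le k$ and $D$ is dominating by the equivalence above. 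As $\psi$ carries no negations, this establishes the claimed hardness already for negation-free formulas.

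The main obstacle is the $\mathrm{NP}$ upper bound rather than the reduction: checking condition (2), i.e.\ $\chi \models \psi$, is $\mathrm{coNP}$-hard for unrestricted candidate explanations, so membership in $\mathrm{NP}$ hinges on combining Lemma \ref{fullsize} (to force the witness to be a subconjunction) with the $\mathit{CNF}$ shape of $\psi$ (to make the per-clause entailment test polynomial). The reduction itself is routine once one observes that closed-neighbourhood clauses turn ``$\chi$ hits every clause'' into ``$D$ dominates every vertex,'' and that the factor-two adjustment in the size bound correctly matches $\fsize(\chi_D) = 2|D|-1$.
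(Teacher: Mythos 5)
Your proposal is correct and follows essentially the same route as the paper's proof: \textsc{NP}-membership by using Lemma \ref{fullsize} to guess a subconjunction $\chi$ and then checking $\DM(\chi)\models\psi$ in polynomial time (your per-clause ``shares a literal or is a tautology'' test is equivalent to the paper's substitute-and-check-validity test on the $\mathit{CNF}$ $\psi$), and \textsc{NP}-hardness by the identical dominating-set reduction with closed-neighbourhood clauses, the all-ones assignment, and size bound $2k-1$. Your writeup merely spells out the correspondence between dominating sets and subconjunctions that the paper leaves as ``easy to verify.''
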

\begin{proof}
For the upper bound, let $\psi \in \PL(\Phi)$ be a $\mathit{CNF}$-formula and let $\varphi$ be a maximal conjunction w.r.t. $\Phi$. We want to determine whether there is an interpolant of size at most $k$. Using Lemma \ref{fullsize}, it suffices to determine whether there is a subconjunction $\chi$ of $\varphi$ such that $\fsize(\DM(\chi)) \leq k$.

Our nondeterministic procedure will start by guessing a subconjunction $\chi$ of $\varphi$. If $\fsize(\DM(\chi)) > k$, then it rejects. Otherwise we replace the formula $\psi$ with the formula $\psi'$ which is obtained from $\psi$ by replacing each proposition symbol $p$ that occurs in $\chi$ with either $\top$ or $\bot$, depending on whether $p$ occurs positively or negatively in $\chi$. Now, if $\psi'$ is valid, then our procedure accepts, and if it is not, then it rejects. Since the validity of $\mathit{CNF}$-formulas can be decided in polynomial time, our procedure runs in polynomial time as well.

For the lower bound we will give a reduction from the dominating set problem. Consider a graph $G = (V,E)$ and a parameter $k$. Let 
\begin{equation}\label{eq:dominating}
\psi := \bigwedge_{v \in V} \bigg(p_v \lor \bigvee_{(v,u) \in E} p_u \bigg)
\end{equation}
and $\varphi := \bigwedge_{v \in V} p_v$. Now $\varphi \models \psi$ and $\psi$ is a $\mathit{CNF}$-formula. It is easy to verify that there exists an interpolant $\theta$ of size at most $2k - 1$ if and only if $G$ has a dominating set of size at most $k$.
\end{proof}

By simply negating formulas, we obtain that the special explainability problem with $b = \bot$ is \textsc{NP}-complete for $\mathit{DNF}$-formulas (and in general \textsc{coNP}-hard, see Example \ref{example:cnf-generally-hard}).

\begin{corollary}\label{cor:dnf-formulas}
    For $\mathit{DNF}$-formulas, the special explainability problem with $b = \bot$ is \textsc{NP}-complete.
\end{corollary}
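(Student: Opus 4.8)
The plan is to obtain both bounds by dualizing, through negation, the corresponding facts for $\mathit{CNF}$-formulas with $b=\top$ from Theorem \ref{thm:cnf-formulas}. The bridge is the reformulation of the special explanation problem for $\PL$: explaining the value $\bot$ of a $\mathit{DNF}$-formula $\psi$ at a maximal conjunction $\varphi$ amounts to finding a minimal interpolant between $\psi$ and $\neg\varphi$, whereas explaining the value $\top$ of $\neg\psi$ amounts to a minimal interpolant between $\varphi$ and $\neg\psi$. Since $\theta$ interpolates between $\psi$ and $\neg\varphi$ exactly when $\neg\theta$ interpolates between $\varphi$ and $\neg\psi$, and since $\neg\psi$ is equivalent to a $\mathit{CNF}$-formula whenever $\psi$ is in $\mathit{DNF}$, the two problems are two sides of the same coin.

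For membership in \textsc{NP} I would dualize the upper bound of Theorem \ref{thm:cnf-formulas}. By Lemma \ref{fullsize}(2) it suffices to search over subconjunctions $\chi$ of $\varphi$, the candidate minimal interpolant being $\DM(\neg\chi)$. The nondeterministic procedure guesses such a $\chi$, rejects if $\fsize(\DM(\neg\chi))>k$, and otherwise verifies that $\DM(\neg\chi)\equiv\neg\chi$ is genuinely an interpolant, i.e. that $\psi\models\neg\chi$, equivalently that $\psi\wedge\chi$ is unsatisfiable. Substituting the literals of $\chi$ into $\psi$ leaves a $\mathit{DNF}$-formula, and unsatisfiability of a $\mathit{DNF}$-formula is decidable in polynomial time (each surviving conjunctive term must be internally contradictory). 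This is exactly the dual of the polynomial-time $\mathit{CNF}$-validity test used for the $b=\top$ case, and it places the problem in \textsc{NP}.

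For \textsc{NP}-hardness I would reduce from the dominating set problem by negating the formula \eqref{eq:dominating} of Theorem \ref{thm:cnf-formulas}. Keeping $\varphi=\bigwedge_{v\in V}p_v$, I set $\psi:=\bigvee_{v\in V}\big(\neg p_v\wedge\bigwedge_{(v,u)\in E}\neg p_u\big)$, which is the $\mathit{DNF}$ form of the negation of \eqref{eq:dominating}; note that $v(\varphi,\psi)=\bot$, so the input is not an $\mathtt{error}$. For a subconjunction $\chi=\bigwedge_{v\in S}p_v$ of $\varphi$ one checks that $\psi\wedge\chi$ is unsatisfiable precisely when every vertex is dominated by $S$, so by Lemma \ref{fullsize}(2) the minimal interpolants between $\psi$ and $\neg\varphi$ correspond to the minimal dominating sets, the interpolant attached to $S$ being $\DM(\neg\chi)=\neg\bigwedge_{v\in S}p_v$ of size $2|S|$. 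Hence there is an interpolant of size at most $2k$ if and only if $G$ has a dominating set of size at most $k$.

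The step that needs genuine care — and the reason one cannot simply reuse the $\mathit{CNF}$ instance verbatim with the same bound — is the formula-size bookkeeping under negation. The De Morgan transforms $\DM(\chi)$ and $\DM(\neg\chi)$ have equal size when $\chi$ mixes positive and negative literals, but differ by exactly one (the extra negation) when $\chi$ is purely positive or purely negative; this is why the threshold here is $2k$ rather than the $2k-1$ of Theorem \ref{thm:cnf-formulas}. Working with an all-positive $\varphi=\bigwedge_{v\in V}p_v$ confines the search to all-positive subconjunctions and makes this accounting uniform, so I expect the main obstacle to be localized and mild.
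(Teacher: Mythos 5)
Your proof is correct and takes essentially the same route as the paper, whose entire proof of this corollary is the remark that it follows from Theorem \ref{thm:cnf-formulas} ``by simply negating formulas''---precisely the dualization you carry out (a guessed subconjunction checked via polynomial-time DNF unsatisfiability for membership, and the negated dominating-set formula for hardness). Your explicit size bookkeeping (threshold $2k$ in place of $2k-1$, due to the extra negation in $\DM(\neg\chi)$ when $\chi$ is all-positive) is exactly the detail needed to make the paper's one-sentence argument rigorous, and you handle it correctly.
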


\subsection{On general explainability}

The general explainability problem for 
propositional logic has 
been discussed in the literature under 
motivations unrelated to 
explainability. The
\textbf{minimum equivalent 
expression problem} MEE asks, given a
formula $\varphi$ and an integer $k$,
if there exists a formula equivalent to $\varphi$ and of
size at most $k$.
This problem has been shown in \cite{BuchfuhrerU11} to be $\Sigma^p_2$-complete under 
Turing reductions, with 
formula size defined as the number of occurrences of proposition symbols and with formulas in
negation normal form. 
The case of standard reductions is open.

For logics beyond PL, the literature on the complexity of formula minimization is surprisingly scarce. The study of formula size in first-order and modal logics has mainly focused on particular properties that either can be expressed very succinctly or via a very large formulas. This leads to relative succinctness results between logics. For lack of space, we shall not discuss the general explainability problem further in the current article, but
instead leave the topic for the future.

\newcommand{\system}[1]{\emph{#1}}
\newcommand{\code}[1]{\texttt{#1}}
\newcommand{\union}{\cup}
\newcommand{\simplify}[2]{{#1}\!\mid_{#2}}

\section{Implementation}
\label{section:implementation}
\begin{lstlisting}[label=code:check-pos,frame=single,float=t,basicstyle=\ttfamily\footnotesize,escapechar=?,caption={Checking Positive Precondition (Lemma \ref{fullsize})}]
:- clause(C), nlit(P): pcond(C,P); plit(N): ncond(C,N). ?\label{line:counter-ex}?
simp(C) :- plit(P), pcond(C,P).      simp(C) :- nlit(N), ncond(C,N). ?\label{line:sat-by-pos-or-neg}?
simp(C) :- pcond(C,A), ncond(C,A). ?\label{line:tautology}? 
:- clause(C), not simp(C), pcond(C,P), not plit(P), not nlit(P). ?\label{line:open-pos}?
:- clause(C), not simp(C), ncond(C,N), not plit(N), not nlit(N). ?\label{line:open-neg}?
\end{lstlisting}
In this section, we devise a proof-of-concept implementation of explainability problems defined above. The implementation exploits the ASP fragment of the \system{Clingo} system%
\footnote{\url{https://potassco.org/clingo/}}
combining the \system{Gringo} grounder with the \system{Clasp} solver. However, we adopt the modular approach of \cite{Janhunen22} for the representation of oracles, thus hiding disjunctive rules and their saturation from encodings.
Since $\mathit{CNF}$-formulas are dominant in the context of SAT checking, we devise our first implementation under an assumption that input formulas take this form. Thus, in spirit of Lemma~\ref{fullsize}, $\varphi$ is essentially a set $L$ of literals and $\psi$ is a set $S$ of \emph{clauses}, i.e., disjunctions of literals. To enable meta-level encodings in ASP, a $\mathit{CNF}$-formula in DIMACS format can be reified into a set of first-order (ground) facts using the \system{lpreify} tool%
\footnote{\url{https://github.com/asptools/software}}
(option flag \code{-d}). Using additional rules, we define domain predicates \code{clause/1}, \code{pcond/2}, and \code{ncond/2} for identifying clauses, their positive conditions, and negative conditions respectively. The literals in the set $L$ are expressed by using domain predicates \code{plit/1} and \code{nlit/1} for positive and negative literals, respectively.

We relax the requirement that the set of literals $L$ is maximal, so that any three-valued interpretation can be represented. However, the precondition for the positive (resp.~negative) explanation is essentially the same: the result $\simplify{S}{L}$ of \emph{partially evaluating} $S$ with respect to $L$ must remain valid (resp.~unsatisfiable) in accordance to Lemma \ref{fullsize}.

The positive check is formalized in Listing~\ref{code:check-pos}. The constraint in Line \ref{line:counter-ex} excludes the possibility that $L$ falsifies $S$ directly. Lines \ref{line:sat-by-pos-or-neg}--\ref{line:tautology} detect which clauses of $S$ are immediately true given $L$ and removed from $\simplify{S}{L}$ altogether. Rules in Lines \ref{line:open-pos} and \ref{line:open-neg} deny any clause containing yet open literals that could be used to falsify the clause in question. The net effect is that the encoding extended by facts describing $L$ and $S$ has an answer set iff $\simplify{S}{L}$ is valid. Since the scope of negation is restricted to domain predicates only, the check is effectively polytime.
\begin{lstlisting}[label=code:check-neg,frame=single,float=t,basicstyle=\ttfamily\footnotesize,escapechar=?,caption={Checking Negative Precondition (Lemma \ref{fullsize})}]
t(A) :- plit(A), atom(A). ?\label{line:is-true}?
{ t(A) } :- atom(A), not plit(A), not nlit(A). ?\label{line:is-undef}? 
:- clause(C), not t(P): pcond(C,P); t(N): ncond(C,N). ?\label{line:clause}? 
\end{lstlisting}
The negative case can be handled by a single ASP program evaluating a coNP query, see Listing \ref{code:check-neg}. The rule in Line \ref{line:is-true} infers any positive literal in $L$ to be true while the negative ones in $L$ remain false \emph{by default}. In Line \ref{line:is-undef}, the truth values of atoms undefined in $L$ are freely chosen. The constraint in Line \ref{line:clause} ensures that each clause in the input $S$ must be satisfied. Thus $\simplify{S}{L}$ is unsatisfiable iff the encoding extended by facts describing $L$ and $S$ has no answer set. In general, this check is deemed worst-case exponential, but for maximal $L$, the task reduces to simple polytime propagation as no choices are active in Line \ref{line:is-undef}.

Our more general goal is to find \emph{minimum-size} explanations $L'\subseteq L$ possessing the identical property as required from $L$, i.e., $\simplify{S}{L'}$ is either valid or unsatisfiable. As regards the size of $L'$, we leave the mapping back to a minimum-size formula as a post-processing step.
In the negative case (the second item of Lemma~\ref{fullsize}), the idea is formalized by Listing~\ref{code:minimize}. While the (fixed) set $L$ is specified as before using predicates \code{plit/1} and \code{nlit/1}, the subset $L'$ is determined by choosing $L$-compatible truth values for atoms in Line \ref{line:pick-true-or-false}. The resulting size of $L'$ is put subject to minimization in Line \ref{line:optimize} if \code{k=0}, as set by default in Line \ref{line:parameter}. Positive values \code{k>0} set by the user activate the \emph{special explainability} mode: the size of $L'$ is at most \code{k} by the cardinality constraint in Line \ref{line:is-explainable}.
\begin{lstlisting}[label=code:minimize,frame=single,float=t,basicstyle=\ttfamily\footnotesize,escapechar=?,caption={Finding Minimum-Size or Bounded-Size Explanations}]
{ t(A) } :- atom(A), plit(A).     { f(A) } :- atom(A), nlit(A). ?\label{line:pick-true-or-false}?
#const k=0. ?\label{line:parameter}?
#minimize { 1,A: t(A), k=0; 1,A: f(A), k=0}. ?\label{line:optimize}?
:- #count { A: t(A); A: f(A)} > k, k>0. ?\label{line:is-explainable}?
\end{lstlisting}
%
Besides this objective, we check that $L'\union S$ is unsatisfiable by using an oracle encoded in Listing \ref{code:neg-oracle}. The \emph{input atoms} (cf.~\cite{Janhunen22}) are declared in Line \ref{line:inputs}. The predicate \code{et/1} captures a two-valued truth assignment compatible with $L'$ as enforced by Lines \ref{line:extension} and \ref{line:completeness}. Moreover, the clauses of $S$ are satisfied by constraints introduced in Line \ref{line:satisfy}. Thus, the oracle has an answer set iff $L'\union S$ is satisfiable. However, \emph{stable-unstable} semantics \cite{BogaertsJT16} and the translation \emph{unsat2lp} from \cite{Janhunen22} yield the complementary effect, amounting to the unsatisfiability of $\simplify{S}{L'}$.
\begin{lstlisting}[label=code:neg-oracle,frame=single,float=t,basicstyle=\ttfamily\footnotesize,escapechar=?,caption={Oracle for the Negative Case}]
{ t(A) } :- plit(A).          { f(A) } :- nlit(A). ?\label{line:inputs}?
et(A) :- t(A). ?\label{line:extension}?
{ et(A) }:- not t(A), not f(A), atom(A). ?\label{line:completeness}?
:- clause(C), not et(P): pcond(C,P); et(N): ncond(C,N). ?\label{line:satisfy}?
\end{lstlisting}

On the other hand, the positive case (the first item of Lemma \ref{fullsize}), can be covered by extending the program of Listing \ref{code:minimize} by further rules in Listing \ref{code:pos-oracle}.
\begin{lstlisting}[label=code:pos-oracle,frame=single,float=t,basicstyle=\ttfamily\footnotesize,escapechar=?,caption={Extension for the Positive Case}]
simp(C) :- t(P), pcond(C,P).          simp(C) :- f(N), ncond(C,N). ?\label{line:sat-by-pos-or-neg2}?
simp(C) :- pcond(C,A), ncond(C,A). ?\label{line:tautology2}?
:- clause(C), not simp(C), pcond(C,P), not t(P), not f(P). ?\label{line:open-pos2}?
:- clause(C), not simp(C), ncond(C,N), not t(N), not f(N). ?\label{line:open-neg2}?
\end{lstlisting}
%
The rules are analogous to those in Listing \ref{code:check-pos}, but formulated in terms of predicates \code{t/1} and \code{f/1} rather than \code{plit/1} and \code{nlit/1}. Thus $L'$ inherits the properties of $L$, i.e., the encoding based on Listings \ref{code:minimize} and \ref{code:pos-oracle} extended by facts describing $L$ and $S$ has an answer set iff $\simplify{S}{L'}$ is valid for a minimum-size $L'\subseteq L$.


\section{Experiments}
\label{section:experiments}

In what follows, we evaluate the computational performance of the \system{Clingo} system by using the encodings from Section \ref{section:implementation} and two benchmark problems,%
\footnote{\url{https://github.com/asptools/benchmarks}}
viz. the famous $n$-queens ($n$-Qs) problem and the dominating set (DS) problem of undirected graphs---recall the proof of Theorem~\ref{thm:cnf-formulas} in this respect. Besides understanding the scalability of Clingo in reasoning tasks corresponding to explanation problems defined in this work, we get also some indications what kinds of explanations are obtained in practice.
We study explainability in the context of these benchmark problems to be first encoded as SAT problems in $\mathit{CNF}$ using the declarative approach from \cite{GebserJKST16}: clauses involved in problem specifications are stated with rules in ASP style, but interpreted in $\mathit{CNF}$ by using an adapter called \emph{Satgrnd}. Thus, the \system{Gringo} grounder of \system{Clingo} can be readily used for the instantiation of the respective propositional schemata, as given in Listings~\ref{code:queens} and~\ref{code:dominating}, for subsequent SAT solving.
 
\begin{lstlisting}[label=code:queens,frame=single,float=t,basicstyle=\ttfamily\footnotesize,escapechar=?,caption={Propositional Specification for the $n$-Queens problem}]
#const n=8. ?\label{line:number-of-queens}?
pair(X1,X2) :- X1=1..n, X2=X1+1..n. ?\label{line:pairs}?
triple(X1,X2,Y1) :- pair(X1,X2), Y1=1..n-(X2-X1). ?\label{line:triples}?
queen(X,Y): Y=1..n :- X=1..n. ?\label{line:position}?
-queen(X,Y1) | -queen(X,Y2) :- X=1..n, pair(Y1,Y2). ?\label{line:mutex-row}?
-queen(X1,Y) | -queen(X2,Y) :- pair(X1,X2), Y=1..n. ?\label{line:mutex-col}?
-queen(X1,Y1) | -queen(X2,Y1+X2-X1) :- triple(X1,X2,Y1). % X1+Y2=X2+Y1 ?\label{line:mutex-diagonal1}?
-queen(X1,Y1+X2-X1) | -queen(X2,Y1) :- triple(X1,X2,Y1). % By symmetry ?\label{line:mutex-diagonal2}?
\end{lstlisting}
%
Our $n$-Qs encoding in Listing \ref{code:queens} introduces a default value for the number of queens \code{n} in Line~\ref{line:number-of-queens} and, based on \code{n}, the pairs and triples of numbers relevant for the construction of clauses are formed in Lines \ref{line:pairs} and \ref{line:triples}.
Then, for the queen in a column \code{X}, the length \code{n} clause in Line~\ref{line:position} chooses a row \code{Y} made unique for \code{X} by the clauses introduced in Line \ref{line:mutex-row}. Similarly, columns become unique by the clauses in Line \ref{line:mutex-col}. Finally, queens on the same diagonal are denied by clauses resulting from Lines \ref{line:mutex-diagonal1} and \ref{line:mutex-diagonal2}.
\begin{lstlisting}[label=code:dominating,frame=single,float=t,basicstyle=\ttfamily\footnotesize,escapechar=?,caption={Propositional Specification for the Dominating Set Problem}]
vertex(X) :- edge(X,Y).     vertex(Y) :- edge(X,Y). ?\label{line:vertices}?
in(X) | in(Y): edge(X,Y) | in(Z): edge(Z,X) :- vertex(X). ?\label{line:in}?
\end{lstlisting}
Turning our attention to the DS problem in Listing \ref{code:dominating}, vertices are extracted from edges in Line \ref{line:vertices}. The clauses generated in Line \ref{line:in} essentially capture the disjunctions collected as parts of $\psi$ in \eqref{eq:dominating}: our encoding assumes that the edges of the graph are provided as ordered pairs for the sake of space efficiency. Intuitively, given any vertex \code{X} in the graph, either it is \emph{in} the (dominating) set or any of its neighboring vertices is.
 
In the experiments, we evaluate the performance of the \system{Clasp} solver (v. 3.3.5) when used to solve various explainability problems. All test runs are executed on a cluster of Linux machines with Intel Xeon 2.40 GHz CPUs, and a memory limit of 16 GB. We report only the running times of the solver, since the implementations of grounding and translation steps are suboptimal due to our meta-level approach and such computations could also be performed off-line in general, and it is worth emphasizing that our current method has been designed to work for any $\mathit{CNF}$ and set of literals given as input.
For $n$-Qs, we generate (i) \emph{positive} instances by searching for random solutions to the problem with different values of $n$ and (ii) \emph{negative} instances by moving, in each solution found, one randomly selected queen to a wrong row. The respective truth assignments are converted into sets of literals $L$ ready for explaining.
For DS, we first generate random planar graphs of varying sizes and search for minimum-size dominating sets for them. Then, \emph{negative} instances are obtained by moving one random vertex outside each optimal set and by describing the outcomes as sets of literals $L$. For \emph{positive} instances, we include the positive literal \code{in(X)} in $L$ for all vertices \code{X}, in analogy to the reduction deployed in Theorem \ref{thm:cnf-formulas}.

The results of our experiments are collected in Figure \ref{fig:results}. Since initial screening suggests that explanation under exact size bounds is computationally more difficult, we present only results obtained by minimizing the size of $L$ using the \system{Clasp} solver in its \emph{unsatisfiable core} (USC) mode.
Figure~\ref{fig:results-a} shows the performance of \system{Clasp} when searching for positive explanations for DS based on planar graphs from $100$ up to $550$ vertices. Explanations are minimum-size dominating sets.
The performance obtained for the respective negative explanations is presented in Figure~\ref{fig:results-b}. In spite of somewhat similar scaling, far smaller planar graphs with the number of vertices in the range $10\ldots 50$ can be covered. In this case, explanations consist of sets of vertices based on some vertex and its neighbors.
The final plot in Figure~\ref{fig:results-c} concerns $n$-Qs when $n=8\ldots 24$ and negative explanations are sought. Explanations obtained from the runs correspond to either (i) single (misplaced) queens or (ii) pairs of (threatening) queens. The respective positive instances simply reproduce solutions and are computationally easy. Therefore, they are uninteresting.
 \begin{figure}[t]
 \vspace{-20pt}
 \hfill
\subfloat[\label{fig:results-a}]{\includegraphics[width=0.33\linewidth]{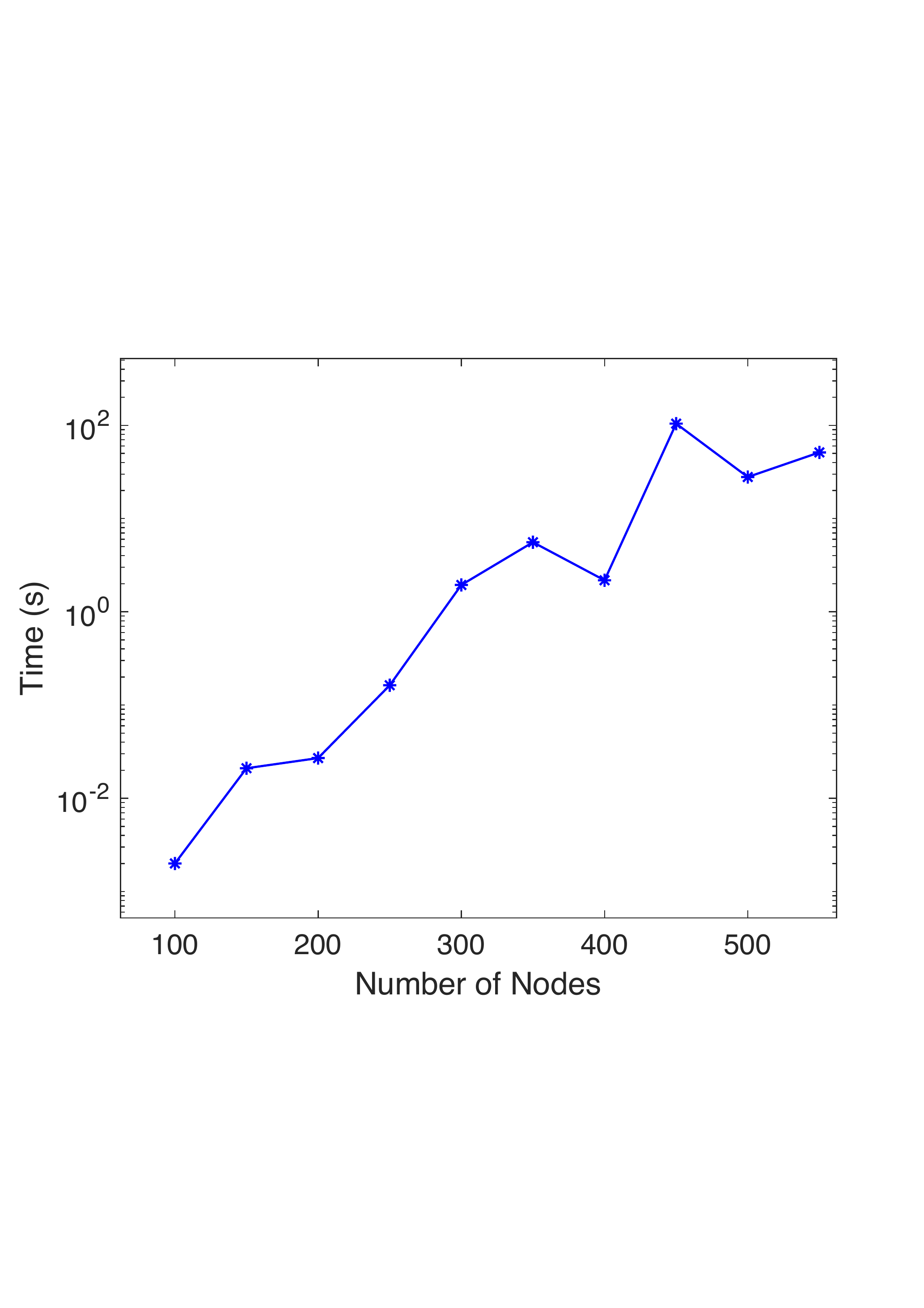}}
\hfill
\subfloat[\label{fig:results-b}]{\includegraphics[width=0.33\linewidth]{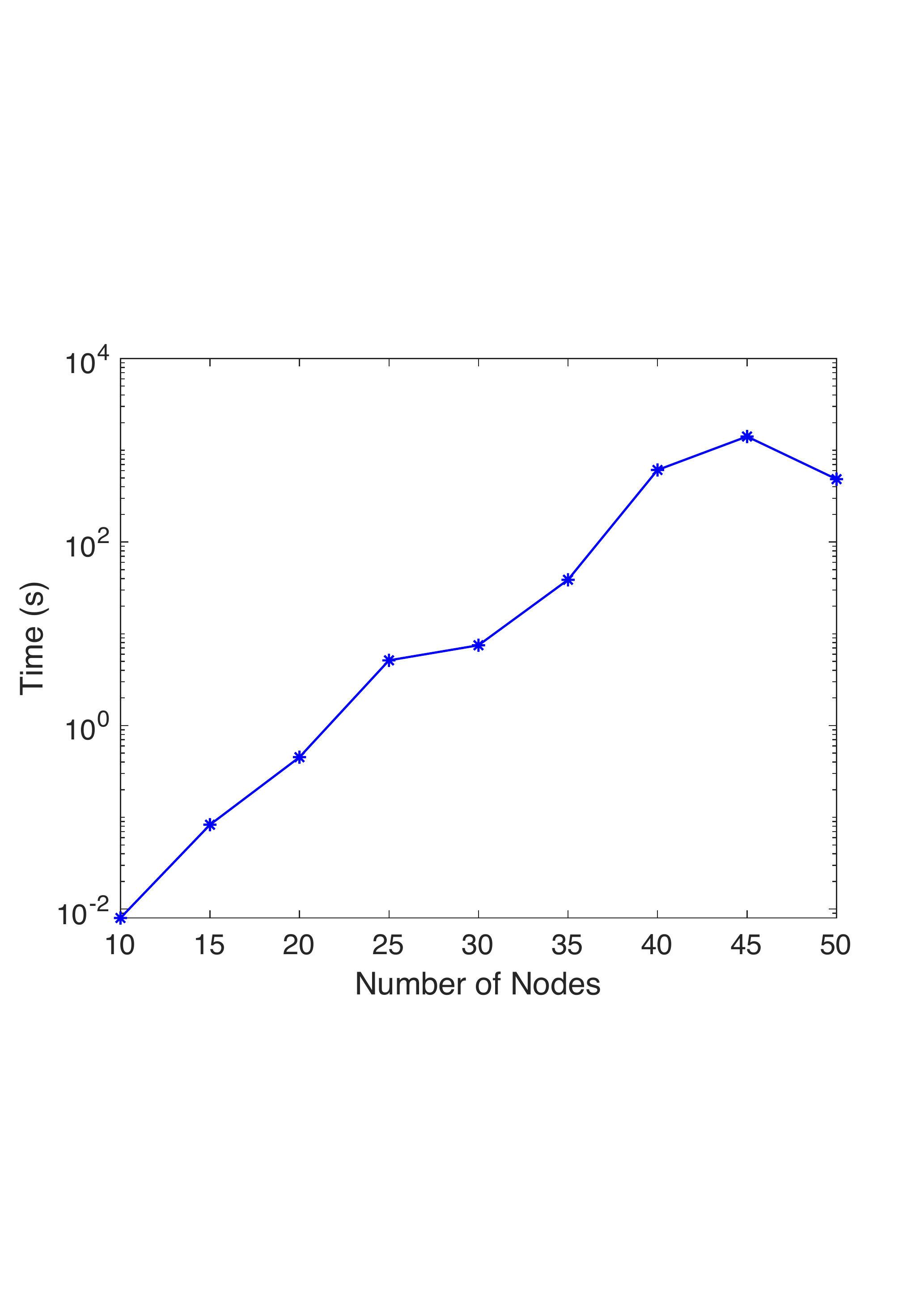}}
\hfill
\subfloat[\label{fig:results-c}]{\includegraphics[width=0.33\linewidth]{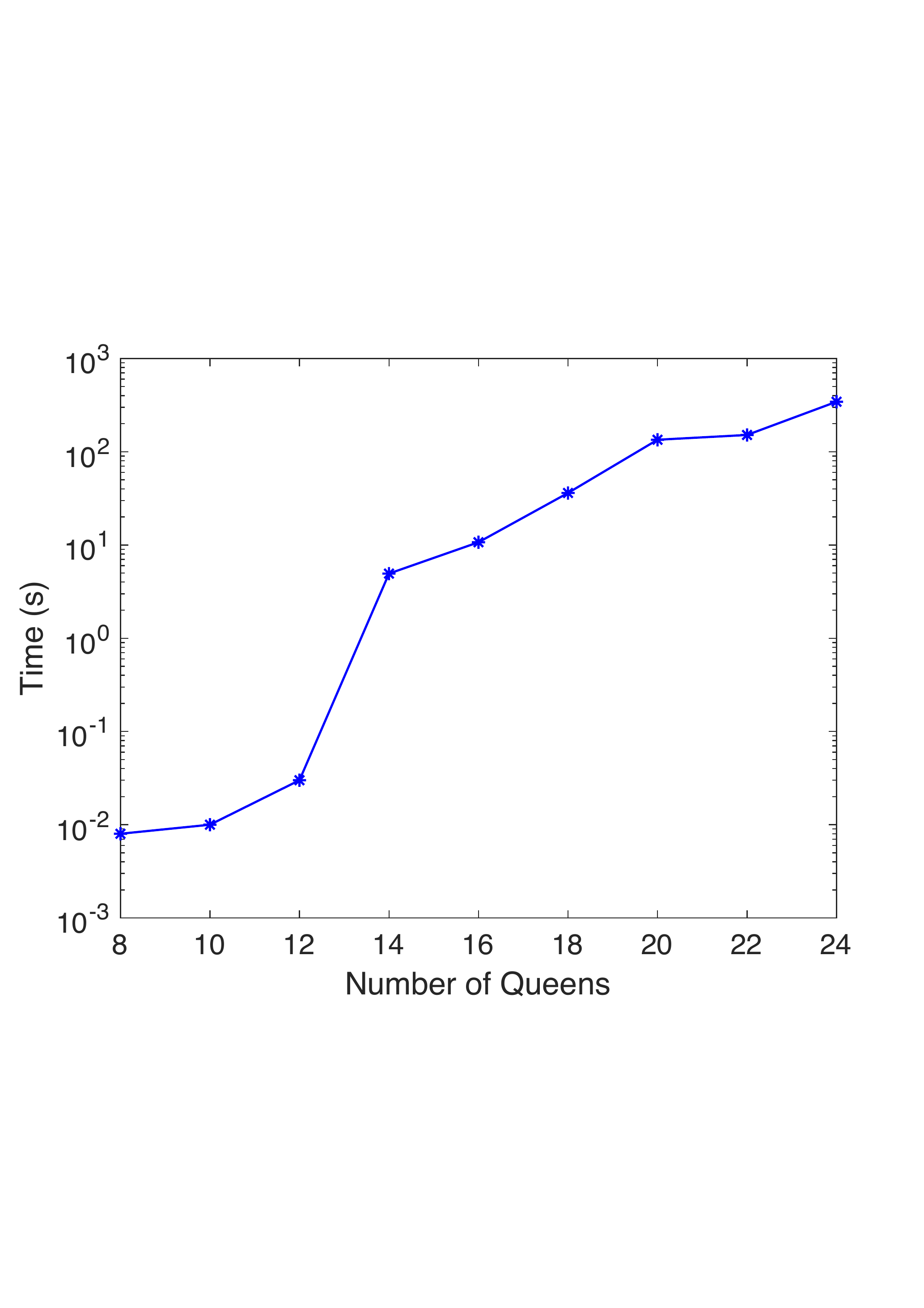}}
\hfill
\caption{Experimental Results on Explaining DS positively (a), negatively (b), and $n$-Qs negatively (c).\label{fig:results}}
\end{figure}

Some observations are in order. The instances obtained by increasing their size, i.e., either the number of vertices or queens, give rise to higher running times almost systematically. For each size, the time is computed as an average for running $10$ instances of equal size. Due to logarithmic scale, running times tend to scale exponentially. Moreover, positive explanations appear to be easier to find than negative ones in compliance with complexity results.


\section{Conclusion}
\label{section:conclusion}

%
We have provided general, logic-based definitions of explainability and studied the particular case of propositional logic in detail. The related $\Sigma_2^P$-complenetess result gives a useful, robust lower bound for a wide range of more expressive logics and future work. We have also shown \textsc{NP}-completeness of the explainability problems with formulas in $\mathit{CNF}$ and $\mathit{DNF}$ when the input truth value is restricted. Moreover, we have presented a proof-of-concept implementation for the explanation of  $\mathit{CNF}$-formulas (without truth value restrictions).
Our experimental results confirm the expected worst-case exponential runtime behaviour of \system{Clasp}. Negative explanations have higher computational cost than positive explanations. The optimization variants of explanation problems seem interesting, because the USC strategy seems very effective and the users need not provide fixed bounds for queries in advance.

We note that short formulas or expressions can often be natural in explaining more complex ones. However, it is clear that in various settings, short expressions can be difficult to parse. Up to an extent, this phenomenon is taken into account in Definition \ref{logictwo}, where we let the complexity function $m$ depend on $\mathcal{M}$ as well as $\mathcal{F}$. However, further important directions remain to be investigated. A central issue here is considering normal forms of logics, as such forms are often custom-made such that shorter formulas are \emph{not} difficult to parse, but instead reveal the meaning of the formula directly.

\medskip

\medskip 

\medskip 

\medskip

\medskip 

\medskip

\noindent
\textbf{Acknowledgments.}\ \ \ 
Tomi Janhunen, Antti Kuusisto, Masood Feyzbakhsh Rankooh and Miikka Vilander were supported by the Academy of Finland consortium project \emph{Explaining AI via Logic} (XAILOG), grant numbers 345633 (Janhunen) and 345612 (Kuusisto). Antti Kuusisto was also supported by the Academy of Finland project \emph{Theory of computational logics}, grant numbers 324435, 328987 (to December 2021) and 352419, 352420 (from January 2022). The author names of this article have been ordered on the basis of alphabetic order.

\bibliographystyle{plain}
\bibliography{references}


\end{document}